\documentclass[letterpaper, 10 pt, journal, twoside]{IEEEtran}
\usepackage{amsmath,amsfonts,amssymb,amsthm}
\usepackage[ruled,vlined,linesnumbered]{algorithm2e}
\usepackage{array}
\usepackage[caption=false,font=normalsize,labelfont=sf,textfont=sf]{subfig}
\usepackage{textcomp}
\usepackage{stfloats}
\usepackage{url}
\usepackage{verbatim}
\usepackage{graphicx}
\usepackage{cite}
\usepackage{xcolor}
\usepackage{booktabs}
\usepackage{multirow}
\usepackage{mathrsfs}
\usepackage{threeparttable}
\theoremstyle{plain}

\newtheorem{theorem}{Theorem}

\theoremstyle{definition}
\newtheorem{problem}{Problem}
\newtheorem{definition}{Definition}

\theoremstyle{remark}

\SetKwComment{tcp}{/* }{ */}

\begin{document}

\title{Homotopy-Aware Corridor Generation without \\ Predefined Reference Paths}

\author{
Haoze Dong,~\IEEEmembership{Graduate Student Member,~IEEE}, 
Minghan Li, 
Meng Guo,~\IEEEmembership{Member,~IEEE}, \\
and Zhongkui Li,~\IEEEmembership{Senior Member,~IEEE}
\thanks{Manuscript received: March 25, 2026 ; Revised June 20, 2026; Accepted July 19, 2026.
This paper was recommended for publication by Editor Ashis Banerjee upon evaluation of the Associate Editor and Reviewers' comments.
This work was supported by the National Natural Science Foundation of China under grants 62425301, U2241214, T2121002.
(\textit{Corresponding author: Zhongkui Li})}
\thanks{The authors are with the School of Advanced Manufacturing and Robotics, Peking University, Beijing 100871, China (e-mail: zhongkli@pku.edu.cn).}
\thanks{Code and Video: https://github.com/HauserDong/path-free-gcs-corridors}
\thanks{Digital Object Identifier (DOI): see top of this page.}
}

\markboth{IEEE ROBOTICS AND AUTOMATION LETTERS. PREPRINT VERSION. ACCEPTED July, 2026}
{Dong \MakeLowercase{\textit{et al.}}: Homotopy-Aware Corridor Generation without Predefined Reference Paths}


\maketitle

\begin{abstract}
Generating safe corridors is essential for collision-free robotic motion planning, yet most existing methods rely on predefined reference paths, which bias corridor geometry and implicitly limit the homotopy classes that can be explored. 
We propose a reference-path-free corridor generation framework on graphs of convex sets (GCS) that constructs corridors directly as sequences of convex sets, allowing corridor structure to emerge from the free-space representation rather than from a guiding path. 
To reason about similarity among corridors, we extend visibility-based deformation from paths to convex-set sequences, enabling the fusion of topologically redundant corridors while preserving distinct alternatives. 
To overcome the limited adaptability of existing GCS methods based on static global decompositions, we further develop an adaptive multi-scale GCS, in which a sampling-based fine-scale graph supports localized updates and a visibility-based coarse-scale graph enables compact global exploration. 
The two levels maintain topological consistency, allowing incremental updates without full graph reconstruction under environmental uncertainty. 
Numerical experiments characterize GCS construction, corridor generation, homotopy-aware exploration, and local updates, showing efficient graph construction, stable trajectory-level performance, and shorter-duration homotopy-aware trajectories than existing baselines.
Hardware experiments on ground and aerial robots, including deployment with onboard localization, further validate the framework under translated and previously unknown obstacles.
\end{abstract}

\begin{IEEEkeywords}
Motion and Path Planning, Autonomous Vehicle Navigation, Collision Avoidance
\end{IEEEkeywords}

\section{Introduction}
\IEEEPARstart{I}{n} modern robotic motion planning, trajectory optimization can be performed either directly in configuration space using continuous optimization techniques~\cite{ratliff2009chomp,kalakrishnan2011stomp}, or within a structured intermediate representation that constrains the feasible region~\cite{liu2017planning,chen2023MultiRobot}.
In cluttered environments, this representation often takes the form of a corridor that encodes safe admissible regions for downstream optimization.
Corridors are commonly generated by inflating a discrete reference path, thereby inheriting its geometry and topology.
However, lifting a path-level representation to a region-level corridor can cause sensitivity to local path artifacts and implicit restriction of the explored topology, particularly in environments with multiple homotopy classes.
This raises a fundamental question: can corridors be generated and reasoned about directly, without relying on a discrete reference path as a geometric prior?

\begin{figure}[!t]
  \centering
  \includegraphics[width=1.0\linewidth]{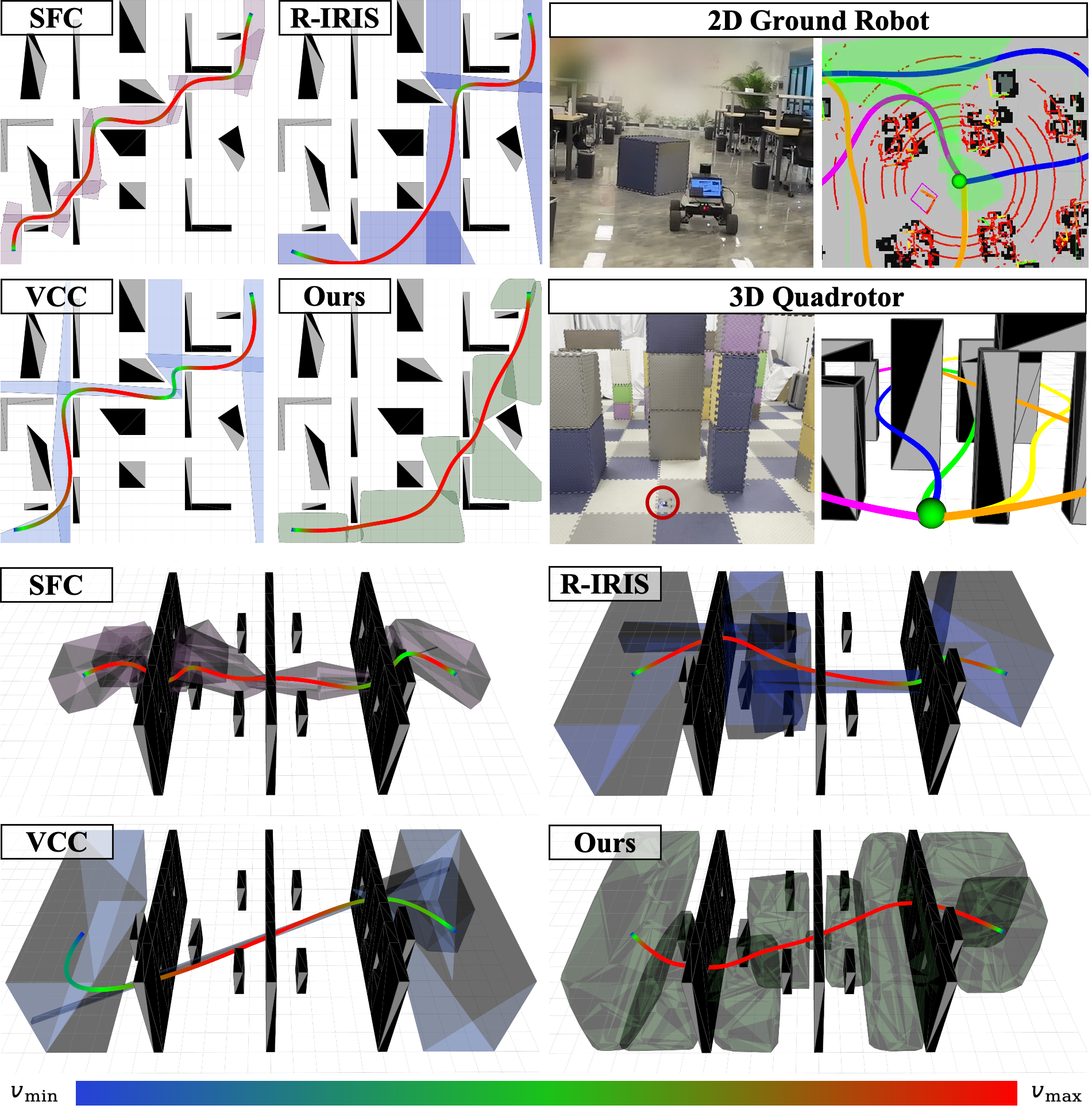}

  \vspace{-0.2cm}

  \caption{
    \textbf{Top Left:} Comparison of trajectories, corridors, and speed profiles in a 2-D multi-topology environment.
    \textbf{Top Right:} Experimental results on a 2-D ground robot and a 3-D quadrotor.
    \textbf{Bottom:} Comparison of trajectories, corridors and speed profiles in a 3-D multi-topology environment.
  }
  \label{fig:all_results}

  \vspace{-0.4cm}
  
\end{figure}

\subsection{Related Work}
\subsubsection{Path-Based Corridor Generation}
A common approach to corridor generation constructs a sequence of convex sets (SCS) by inflating a discrete reference path \cite{liu2017planning,chen2023MultiRobot, park2022online}. 
In this paradigm, corridor geometry and topology are implicitly inherited from the underlying path, which serves as a low-dimensional geometric prior. 
While effective in many scenarios, downstream performance depends on the guiding path and the resulting region quality, which may restrict the spatial freedom available for trajectory optimization (see Fig.~8 in \cite{wang2025fast} and Sec.~VIII.A in \cite{WernerP-RSS-25}). 
Moreover, reliance on a single reference path typically restricts exploration to the homotopy class represented by that path, limiting systematic reasoning about topological diversity at the corridor level.

\subsubsection{Planning with Graphs of Convex Sets}
A GCS represents free space as convex regions connected by nonempty intersections, coupling graph search with convex trajectory optimization over region sequences \cite{marcucci2023motion,Natarajan-RSS-24} and enabling compact modeling of complex environments~\cite{werner2024approximating}.
Once a GCS is available, shortest paths can be formulated and solved using mixed-integer convex optimization and convex relaxations~\cite{marcucci2024shortest}, while GCS relaxations can guide nonconvex trajectory optimization over candidate region sequences~\cite{von2024using}.
However, these methods generally rely on a precomputed global convex decomposition that may contain redundant region sequences, while corridor generation, corridor-level homotopy reasoning, and localized graph maintenance remain largely unaddressed.

\subsubsection{Homotopy-Aware Motion Planning}
Homotopy-aware motion planning has been extensively studied as a means to generate topologically distinct solutions in complex environments. 
Most methods define and reason about homotopy classes at the level of paths, using tools such as H-signature\cite{bhattacharya2012topological}, D-Signature\cite{wang2025customize}, winding numbers\cite{berger2001topological}, or visibility-based criteria\cite{jaillet2008path,zhou2020robust}. 
However, trajectory optimization and feasibility constraints are often imposed at the corridor or region level rather than on paths. 
This mismatch motivates the need for homotopy reasoning at the level of corridors, which is not addressed by existing path-centric formulations.

\subsection{Our Method}

\begin{figure}[!t]
  \centering
  \includegraphics[width=1.0\linewidth]{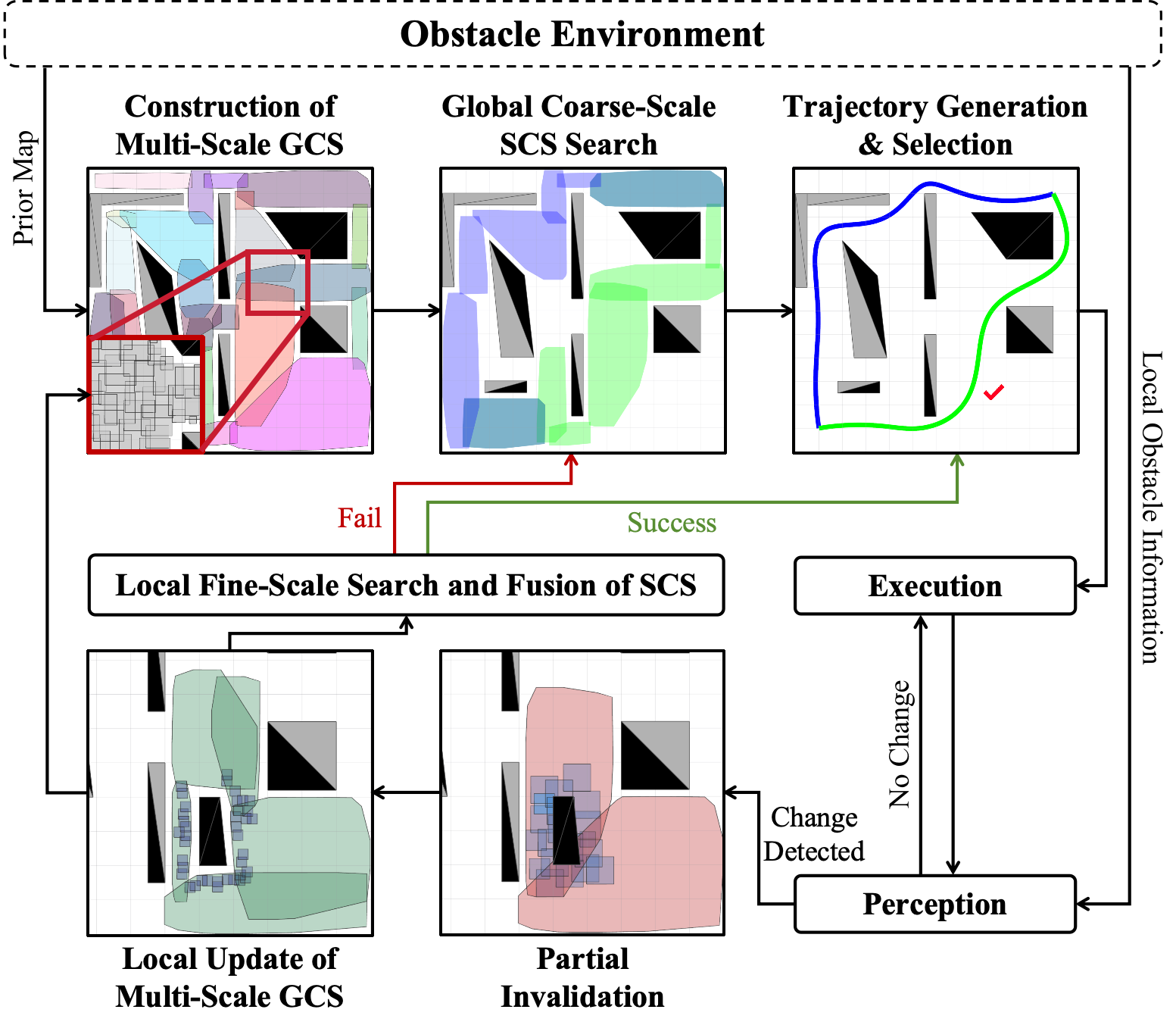}
  
  \vspace{-0.2cm}

  \caption{Overall framework with offline initialization and online adaptation. The offline phase constructs a multi-scale GCS structure, enabling global discovery of topology-distinct corridors. The online phase performs localized updates without requiring full graph reconstruction.}
  \label{fig:framework}

  \vspace{-0.4cm}

\end{figure}

As illustrated in Fig.~\ref{fig:framework}, we propose a homotopy-aware corridor generation framework that constructs corridors directly as SCSs from the free-space representation, without predefined reference paths.
We extend visibility deformation (VD) and uniform visibility deformation (UVD) from paths to SCSs to assess corridor-level similarity, fuse redundant corridors, and retain topology-distinct alternatives.
We develop an adaptive multi-scale GCS comprising a sampling-based fine-scale GCS (F-GCS) and a visibility-based coarse-scale GCS (C-GCS).
The coarse level enables compact discovery of topology-distinct corridors, while the fine level supports localized updates under environmental uncertainty without the need for full graph reconstruction. 

The main contributions of this work are threefold:
(I) A reference-path-free corridor generation formulation is introduced, where corridors are searched directly as SCSs, avoiding the geometric and topological bias induced by reference paths;
(II) A corridor-level homotopy reasoning framework is established by extending VD and UVD from paths to SCSs, enabling similarity assessment and redundancy reduction;
(III) An adaptive multi-scale GCS is developed to overcome the limitation of classical GCS methods based on static global decompositions, enabling efficient discovery of topology-distinct corridors and localized updates under map uncertainty.
\section{Problem Formulation} \label{sec:problem-formulation}

Let $\mathcal{C} \subseteq \mathbb{R}^d$ be the configuration space, and $\mathcal{O} \triangleq \{O_j\}_{j=1}^{M}$ the obstacle set, where each $O_j \subseteq \mathcal{C}$ is convex. Nonconvex obstacles are approximated by convex components. The free space is $\mathcal{F} \triangleq \mathcal{C} \setminus \bigcup_j O_j$, approximated by collision-free convex sets $\{\mathcal{L}_i\}_{i=1}^N$, where $\mathcal{L}_i \subseteq \mathcal{F}$. These sets form a GCS $\mathcal{G} \triangleq (\mathcal{V}, \mathcal{E})$, where vertices $v_i \in \mathcal{V}$ correspond to $\mathcal{L}_i$, and edges $(v_i, v_j) \in \mathcal{E}$ exist if $\mathcal{L}_i \cap \mathcal{L}_j \neq \emptyset$. 
Given $q_0,q_T\in\mathcal{F}$, we define the singleton convex sets $\mathcal{L}_s\triangleq\{q_0\}$ and $\mathcal{L}_t\triangleq\{q_T\}$, corresponding to source and target vertices $v_s$ and $v_t$. They are connected to all vertices whose convex sets intersect $\mathcal{L}_s$ and $\mathcal{L}_t$, respectively.
An SCS is an ordered sequence $\mathcal{L}^m \triangleq (\mathcal{L}_1^m, \ldots, \mathcal{L}_{k_m}^m)$ corresponding to a path from $v_s$ to $v_t$ in $\mathcal{G}$, defining a collision-free corridor within which a trajectory can be optimized using standard trajectory optimization methods. Multiple SCSs may exist, but distinct paths on $\mathcal{G}$ may induce topologically similar corridors, leading to redundant computation. Our goal is to maintain one representative SCS per class based on corridor-level criteria.
We assume a static environment with uncertainty: an initial obstacle set $\mathcal{O}_{\mathrm{prior}}$ is available, while online perception may introduce updates that invalidate parts of $\mathcal{G}$. 

\begin{problem}
Given $\mathcal{O}_{\mathrm{prior}}$, $q_0$, $q_T$, and incremental obstacle updates, construct and maintain topologically distinct, collision-free corridors connecting $q_0$ and $q_T$ through paths from $v_s$ to $v_t$, allowing the robot to generate diverse trajectories and select the best one in a static yet uncertain environment.
\hfill $\blacksquare$
\end{problem}
\section{Preliminaries} \label{sec:preliminaries}
A path is a continuous function $p:[0,1]\rightarrow\mathcal{F}$ with
$p(0)=q_0$ and $p(1)=q_T$. 
Homotopy classifies paths based on continuous deformations with fixed endpoints~\cite{allenhatcher2001Algebraic}, but can be too coarse to distinguish practically different solutions in high-dimensional planning problems~\cite{jaillet2008path,zhou2020robust}.

Two points $x,y\in\mathcal{F}$ are \emph{visible} if the line segment $\overline{xy}$ lies entirely in the free space $\mathcal{F}$.
Visibility deformation refines classical homotopy by imposing visibility constraints in the free space.
Uniform visibility deformation further requires this visibility relation to hold pointwise along two parameterized paths.
The path-level VD and UVD notions used in this work are defined as follows.

\begin{definition}[Visibility Deformation (VD)~\cite{jaillet2008path}]
Two paths $p$ and $p'$ with the same endpoints are visibility deformable if one can be continuously deformed into the other through visibility-preserving straight-line connections; equivalently, they belong to the same VD class.
\hfill $\blacksquare$
\end{definition}

\begin{definition}[Uniform Visibility Deformation (UVD)~\cite{zhou2020robust}]
Two paths $p(s)$ and $p'(s)$ with the same endpoints are uniform visibility deformable if $\forall s\in[0,1]$, the segment $\overline{p(s) \, p'(s)}\subseteq\mathcal{F}$; equivalently, they belong to the same UVD class.
\hfill $\blacksquare$
\end{definition}

VD and UVD are stricter than classical homotopy and provide a visibility-aware classification of paths.
We extend them from paths to SCSs, enabling corridor-level topology-aware reasoning for which classical path homotopy is not directly defined.
\section{Proposed Solution} \label{sec:method}
The proposed solution consists of two coupled components.
First, we construct an adaptive multi-scale GCS in Sec.~\ref{sec:adaptive-multi-scale-graphs-of-convex-sets} to capture global connectivity of the free space while supporting localized updates under environmental uncertainty.
Second, we perform homotopy-aware corridor generation directly on the multi-scale GCS in Sec.~\ref{sec:homotopy-aware-corridor-generation}, representing corridors as SCSs and identifying and fusing topologically redundant corridors through corridor-level homotopy reasoning.
Together, as summarized in Sec.~\ref{sec:overall-framework}, these components enable efficient discovery and maintenance of topology-distinct corridors without predefined reference paths.

\subsection{Construction of Multi-Scale Graphs of Convex Sets} \label{sec:adaptive-multi-scale-graphs-of-convex-sets}

Planning in static yet uncertain environments requires a representation that captures global connectivity, retains local geometric fidelity, and supports localized updates.
Existing GCS methods typically operate on a single convex decomposition~\cite{marcucci2023motion,Natarajan-RSS-24,werner2024approximating}. 
At a single scale, coarse convex sets yield compact graphs but reduce geometric fidelity, whereas fine convex sets retain local detail but increase graph size and downstream computation. 
To balance these properties, we propose an adaptive multi-scale GCS $\mathcal{G}_{\mathrm{MS}}$.

\subsubsection{Multi-Scale GCS Architecture} \label{sec:multi-scale-gcs}

The architecture consists of a fine-scale representation and a coarse-scale abstraction.

\begin{definition}
The fine-scale GCS is defined as
$\mathcal{G}_f \triangleq (\mathcal{V}_f, \mathcal{E}_f)$,
where each $v_i^f \in \mathcal{V}_f$ corresponds to a convex set $\mathcal{L}_i^f \subseteq \mathcal{F}$.
The size of $\mathcal{L}_i^f$ is defined as its maximum coordinate-wise spatial extent, 
$
\operatorname{size}(\mathcal{L}_i^f)
\triangleq
\sup_{x,y\in \mathcal{L}_i^f}\|x-y\|_{\infty}.
$
Each fine-scale convex set satisfies $\operatorname{size}(\mathcal{L}_i^f)\leq \varepsilon$, where $\varepsilon>0$ is the prescribed fine-scale spatial resolution.
An edge $(v_i^f, v_j^f) \in \mathcal{E}_f$ exists if and only if
$\mathcal{L}_i^f \cap \mathcal{L}_j^f \neq \emptyset$.
\hfill $\blacksquare$
\end{definition}

The F-GCS captures local free-space geometry with high fidelity and serves as the underlying substrate for efficient localized updates under environmental uncertainty.

\begin{definition}
The coarse-scale GCS is defined as
$\mathcal{G}_c \triangleq (\mathcal{V}_c, \mathcal{E}_c)$,
where each $v_i^c \in \mathcal{V}_c$ corresponds to a convex set
\begin{equation} \label{eq:coarse-scale-set}
    \mathcal{L}_i^c
    \triangleq
    \operatorname{Conv}
    \left(
        \bigcup_{k \in \mathcal{I}(v_i^c)}
        \mathcal{L}_k^f
    \right) 
    \subseteq \mathcal{F},
\end{equation}
where $\mathrm{Conv}(\cdot)$ is the convex hull and $\mathcal{I}(v_i^c) \subseteq \{1,\dots,|\mathcal{V}_f|\}$ denotes a nonempty index set of supporting fine-scale vertices, selected such that the resulting convex hull is collision-free.
An edge $(v_i^c, v_j^c) \in \mathcal{E}_c$ exists if and only if
$\mathcal{L}_i^c \cap \mathcal{L}_j^c \neq \emptyset$.
\hfill $\blacksquare$
\end{definition}

The C-GCS provides a compact abstraction of global connectivity, enabling efficient exploration of corridor candidates.

\begin{problem}
Given a prior map $\mathcal{O}_{\mathrm{prior}}$, resolution $\varepsilon$, and optionally a current graph $\mathcal{G}_{\mathrm{MS}}$ and a local obstacle update $\Delta\mathcal{O}$, construct $\mathcal{G}_{\mathrm{MS}}$ offline or update it locally.
\hfill $\blacksquare$
\end{problem}

As summarized in Alg.~\ref{alg:maintain-msgcs}, the proposed multi-scale GCS is maintained by a unified procedure for offline construction and online local updates.
Offline, the F-GCS is built by sampling collision-free configurations and expanding them into convex sets of size at most $\varepsilon$; in our implementation, these sets are axis-aligned hypercubes, so $\varepsilon$ is the nominal side length of each square in 2-D and cube in 3-D.
Obstacles are inflated by the robot footprint and, when needed, an additional sensing or localization uncertainty margin.
The resolution $\varepsilon$ is chosen below the characteristic width of task-relevant narrow passages.
Since smaller $\varepsilon$ improves local fidelity but increases the number of F-GCS vertices and the construction cost, especially in 3-D, we use it as a nominal upper bound and reduce the convex-set size adaptively only in locally uncovered or narrow free-space regions.
The C-GCS is then formed by selecting mutually invisible roots and performing BFS-based aggregation on $\mathcal{G}_f$, while covered fine-scale sets are excluded and uncovered ones are selected as new roots.

\begin{algorithm}[t]
\caption{\textsc{MaintainMSGCS}}
\label{alg:maintain-msgcs}
\KwIn{Prior map $\mathcal{O}_{\mathrm{prior}}$, resolution $\varepsilon$, optional current graph $\mathcal{G}_{\mathrm{MS}}$ together with obstacle update $\Delta\mathcal{O}$}
\KwOut{Updated multi-scale GCS $\mathcal{G}_{\mathrm{MS}}$}

$\mathcal{R} \leftarrow \mathcal{F}$ if $\mathcal{G}_{\mathrm{MS}}=\emptyset$, else the affected spatial region induced by $\Delta\mathcal{O}$ \;
Remove invalid fine-scale sets in $\mathcal{R}$ and affected coarse-scale sets, together with their incident edges \;
$\{\mathcal{L}_i^{f}\}_{\mathcal{R}} \leftarrow$ Sample in $\mathcal{R}$ and expand into collision-free convex sets of size $\leq \varepsilon$ \;
Update fine-scale adjacency in $\mathcal{R}$ and obtain $\mathcal{G}_f$ \;

$\{v_r^f\}_{\mathcal{R}} \leftarrow$ Select mutually invisible roots from the updated fine-scale subgraph in $\mathcal{R}$ \;
\ForEach{root node in $\{v_r^f\}_{\mathcal{R}}$}{
    Perform BFS on $\mathcal{G}_f$ restricted to $\mathcal{R}$, adding supporting vertices while the convex hull in \eqref{eq:coarse-scale-set} remains collision-free \;
    Construct the corresponding coarse-scale set $\mathcal{L}_i^c$ \;
}
Update coarse-scale adjacency for affected coarse-scale sets and obtain $\mathcal{G}_c$ \;

\KwRet{$\mathcal{G}_{\mathrm{MS}} = (\mathcal{G}_f, \mathcal{G}_c)$}
\end{algorithm}

\subsubsection{Online Local Update and Adaptivity} \label{sec:online-adaptivity}

During online execution, Alg.~\ref{alg:maintain-msgcs} is invoked in its local-update mode.
The spatial change induced by $\Delta\mathcal{O}$ is decomposed into the newly occupied region $\Delta\mathcal{O}^{+}$ and the newly freed region $\Delta\mathcal{O}^{-}$, both contained in the affected region $\mathcal{R}$.
The algorithm restricts reconstruction to $\mathcal{R}$, updates the fine-scale subgraph locally, and propagates the resulting modifications to the coarse scale.
We first identify the fine-scale vertices invalidated by $\Delta\mathcal{O}^{+}$:
\begin{equation}
\tilde{\mathcal{V}}_f
\triangleq
\left\{
v_k^f \in \mathcal{V}_f
\mid
\mathcal{L}_k^f \cap \Delta\mathcal{O}^{+} \neq \emptyset
\right\}.
\end{equation}
Vertices in $\tilde{\mathcal{V}}_f$ are removed from $\mathcal{G}_f$ together with their incident edges in $\mathcal{E}_f$.
Local resampling within $\mathcal{R}$ then restores coverage around removed unsafe sets and covers newly freed space in $\Delta\mathcal{O}^{-}$.
Coarse-scale updates are induced by these fine-scale modifications.
Specifically, an existing coarse-scale vertex $v_i^c \in \mathcal{V}_c$ is invalidated if it is supported by any removed fine-scale vertex, i.e.,
$
\{v_k^f \mid k \in \mathcal{I}(v_i^c)\}
\cap
\tilde{\mathcal{V}}_f
\neq
\emptyset.
$
Such coarse-scale vertices are reconstructed from the updated local fine-scale subgraph, while newly sampled fine-scale sets may form additional coarse-scale vertices.
Thus, updates are confined to the affected subgraphs of $\mathcal{G}_f$ and $\mathcal{G}_c$.
This enables localized adaptation to map updates while retaining a stable global abstraction for corridor generation.

\begin{figure}[!t]
  \centering
  \includegraphics[width=0.9\linewidth]{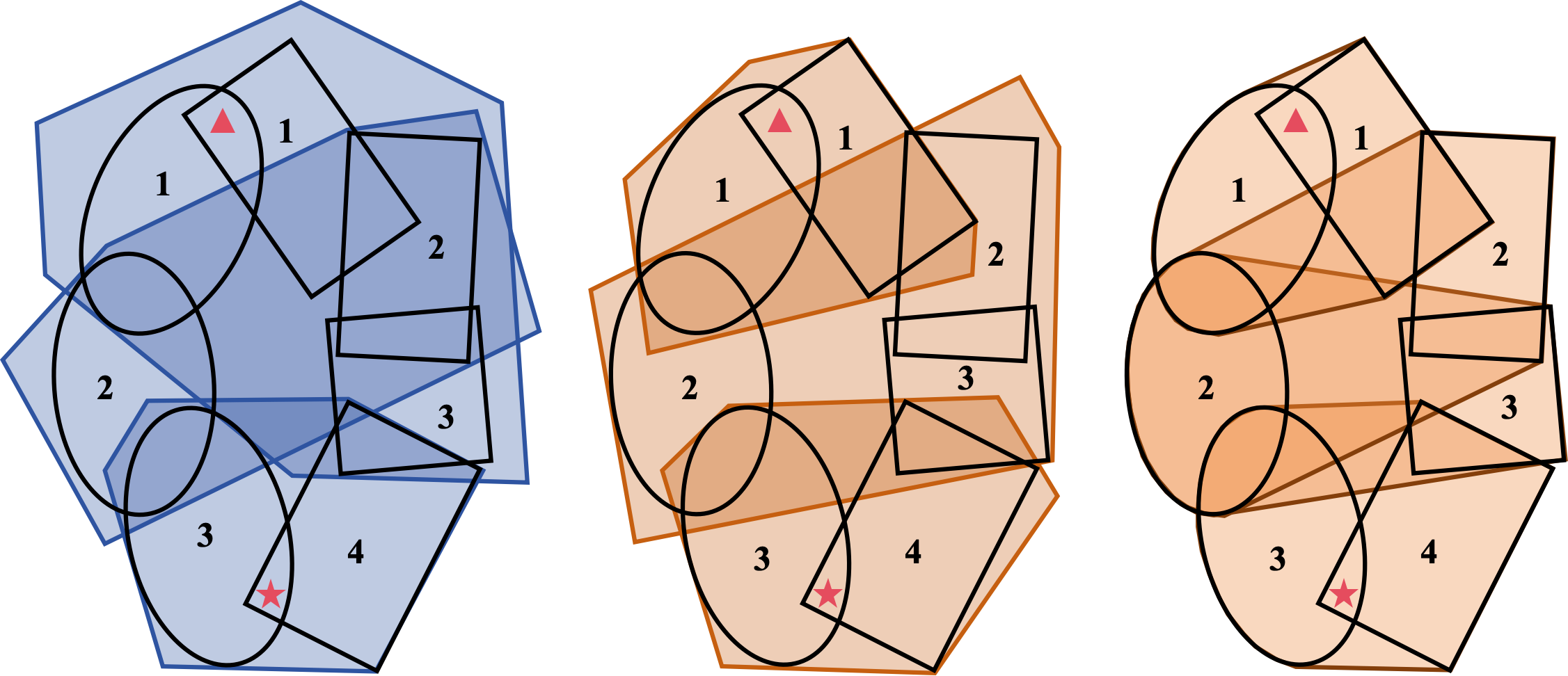}

  \vspace{-0.2cm}

  \caption{
\textbf{Illustration of VD, UVD, and the UVD criterion.}
The ellipse-shaped and polygonal black outlines denote two indexed input SCSs, and the colored filled regions denote the fused SCS.
Red markers denote the start and goal.
\textbf{Left:} VD requires joint collision-free coverage, without index-order constraints.
\textbf{Center:} UVD further requires a monotone embedding of both original SCSs into the fused SCS.
\textbf{Right:} Theorem~\ref{thm:uvd-scs} constructs the fused SCS through pairwise monotone fusion.
}
  \label{fig:VD-illustration}

  \vspace{-0.4cm}

\end{figure}

\subsubsection{Topological Consistency across Scales}
\label{sec:topological-consistency}

A key property of the multi-scale GCS maintained by Alg.~\ref{alg:maintain-msgcs} is that the abstraction preserves corridor-level topological consistency across scales.
This is analyzed using corridor-level VD.

\begin{definition}
\label{def:vd-between-scs}
Two SCSs $\mathcal{L}^{(1)} = (\mathcal{L}^{(1)}_1, \ldots, \mathcal{L}^{(1)}_m)$ and $\mathcal{L}^{(2)} = (\mathcal{L}^{(2)}_1, \ldots, \mathcal{L}^{(2)}_n)$ with the same singleton source and target sets are visibility deformable, denoted $\mathcal{L}^{(1)} \sim_{\mathrm{VD}} \mathcal{L}^{(2)}$, if there exists a collision-free SCS $\mathcal{L}^{*} = (\mathcal{L}^{*}_1, \ldots, \mathcal{L}^{*}_p)$ such that
$\forall l \in \{1,2\},\; \forall \mathcal{L}^{(l)}_i \in \mathcal{L}^{(l)},\;
\exists \mathcal{L}^{*}_k \in \mathcal{L}^{*}
\;\text{s.t.}\;
\mathcal{L}^{(l)}_i \subseteq \mathcal{L}^{*}_k;\;
\forall \mathcal{L}^{*}_k \in \mathcal{L}^{*},\;
\exists \mathcal{L}^{(1)}_i \in \mathcal{L}^{(1)},\;
\exists \mathcal{L}^{(2)}_j \in \mathcal{L}^{(2)}
\;\text{s.t.}\;
\mathcal{L}^{(1)}_i,\, \mathcal{L}^{(2)}_j \subseteq \mathcal{L}^{*}_k.$
\hfill $\blacksquare$
\end{definition}

As shown in Fig.~\ref{fig:VD-illustration} (left), this definition provides a practical method for assessing topological similarity between corridors. Two SCSs are VD-consistent if they admit a common collision-free covering SCS, without requiring index-order alignment, analogous to path-level VD.

\begin{theorem}
\label{thm:cross-scale-topology}
The multi-scale GCS maintained by Alg.~\ref{alg:maintain-msgcs} is corridor-level VD-consistent across scales.
For any fine-scale SCS $\mathcal{L}^f$, there exists an induced coarse-scale SCS $\mathcal{L}^c$ such that
$\mathcal{L}^f \sim_{\mathrm{VD}} \mathcal{L}^c$.
Any two fine-scale SCSs that induce the same coarse-scale SCS are mutually VD.
\end{theorem}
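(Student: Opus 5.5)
The plan is to build the induced coarse-scale SCS explicitly from the fine-scale one, and then to use the coarse sets themselves as the common covering SCS $\mathcal{L}^{*}$ of Definition~\ref{def:vd-between-scs}.

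\textbf{Step 1: coverage of the fine scale.} First I would record the structural facts that Alg.~\ref{alg:maintain-msgcs} guarantees, both after offline construction and after every local update. Every fine-scale vertex $v_k^f$ is covered by some coarse-scale vertex $v_i^c$ with $k\in\mathcal{I}(v_i^c)$. This holds because the root-selection loop continues until all fine-scale sets are covered: uncovered sets become new roots, and a root always supports itself. Outside $\mathcal{R}$ the previous coverage persists. Inside $\mathcal{R}$ all invalidated coarse vertices are rebuilt, so the covering property survives updates. I would fix a covering map $\pi:\mathcal{V}_f\to\mathcal{V}_c$, for example the first coarse vertex whose BFS absorbed $v_k^f$. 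Then \eqref{eq:coarse-scale-set} gives $\mathcal{L}_k^f\subseteq\mathcal{L}_{\pi(k)}^c$, and each $\mathcal{L}_i^c$ is collision-free by construction. The singleton source and target are treated as their own vertices on both scales.

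\textbf{Step 2: the induced SCS.} Given a fine-scale SCS $\mathcal{L}^f=(\mathcal{L}_{k_1}^f,\dots,\mathcal{L}_{k_m}^f)$, I would map it to $(\mathcal{L}^c_{\pi(k_1)},\dots,\mathcal{L}^c_{\pi(k_m)})$ and delete consecutive repetitions. Call the result $\mathcal{L}^c$. It is a path in $\mathcal{G}_c$: whenever $\pi(k_j)\neq\pi(k_{j+1})$, the set $\mathcal{L}^f_{k_j}\cap\mathcal{L}^f_{k_{j+1}}$ is nonempty and lies in $\mathcal{L}^c_{\pi(k_j)}\cap\mathcal{L}^c_{\pi(k_{j+1})}$, so the coarse edge exists. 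Source and target adjacency transfer in the same way.

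\textbf{Step 3: VD between the scales.} Take $\mathcal{L}^{*}\triangleq\mathcal{L}^c$ in Definition~\ref{def:vd-between-scs}. It is collision-free. Each $\mathcal{L}^f_{k_j}$ is contained in $\mathcal{L}^c_{\pi(k_j)}\in\mathcal{L}^{*}$, and each coarse set trivially contains itself. Conversely, each $\mathcal{L}^{*}_k$ contains some fine set of $\mathcal{L}^f$, namely a preimage, and it contains itself as a member of $\mathcal{L}^{(2)}=\mathcal{L}^c$. Hence $\mathcal{L}^f\sim_{\mathrm{VD}}\mathcal{L}^c$.

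\textbf{Step 4: same induced SCS implies VD.} If $\mathcal{L}^{f}$ and $\mathcal{L}^{f\prime}$ induce the same $\mathcal{L}^c$, I would again take $\mathcal{L}^{*}=\mathcal{L}^c$. Every fine set of either sequence lies in some element of $\mathcal{L}^{*}$. Every $\mathcal{L}^{*}_k$ is the image of at least one fine set from each sequence, since both sequences map onto all of $\mathcal{L}^c$. Both conditions of Definition~\ref{def:vd-between-scs} therefore hold, so $\mathcal{L}^f\sim_{\mathrm{VD}}\mathcal{L}^{f\prime}$.

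\textbf{Main obstacle.} The hard part is Step 1 under online updates. I must check that after removing $\tilde{\mathcal{V}}_f$ and the coarse vertices they support, every surviving or newly sampled fine set is covered by a valid coarse set, and that surviving coarse sets remain collision-free. Surviving coarse hulls could intersect $\Delta\mathcal{O}^{+}$ even though none of their supporting fine sets does. I would handle this by reading ``affected coarse-scale sets'' in Alg.~\ref{alg:maintain-msgcs} as including every coarse set intersecting $\Delta\mathcal{O}^{+}$, and making that assumption explicit. With that reading, invariance of the covering map follows by induction over update calls.
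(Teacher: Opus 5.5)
Your proposal is correct and follows the paper's proof sketch essentially step for step. You map each fine set to a containing coarse set and delete consecutive duplicates; fine-set intersections lie in the corresponding coarse-set intersections, so the coarse edges exist. You then use that coarse sequence itself as the common collision-free covering SCS in Def.~\ref{def:vd-between-scs} for both claims.

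The only difference is that you explicitly check whether coverage and collision-freeness survive online updates, which the paper takes as given by construction. If you pursue this, your claim ``outside $\mathcal{R}$ the previous coverage persists'' needs the same kind of explicit assumption. An invalidated coarse vertex may also support fine sets lying outside $\mathcal{R}$, and the $\mathcal{R}$-restricted rebuild in Alg.~\ref{alg:maintain-msgcs} would not re-cover them.
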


\noindent\emph{Proof sketch.}
By Alg.~\ref{alg:maintain-msgcs}, coarse-scale sets are constructed by aggregating supporting fine-scale sets and taking their convex hulls as in \eqref{eq:coarse-scale-set}. 
Hence each fine-scale set is contained in some coarse-scale set.
Mapping consecutive fine-scale sets to containing coarse-scale sets and removing consecutive duplicates yields a valid coarse-scale SCS, since intersecting fine-scale sets imply that their containing coarse-scale sets also intersect.
Thus, any fine-scale SCS can be mapped to a coarse-scale SCS that covers it, which implies
$\mathcal{L}^f \sim_{\mathrm{VD}} \mathcal{L}^c$ by Def.~\ref{def:vd-between-scs}.
Moreover, if two fine-scale SCSs induce the same coarse-scale SCS, that coarse sequence is a common collision-free covering SCS for both, so they are mutually VD.
\hfill $\blacksquare$

\subsection{Generation of Homotopy-Aware Corridors} \label{sec:homotopy-aware-corridor-generation}

Based on the multi-scale GCS developed in Sec.~\ref{sec:adaptive-multi-scale-graphs-of-convex-sets}, this subsection generates and maintains topology-distinct corridors under uncertainty by reasoning directly over SCSs.

\begin{problem}
Given $\mathcal{G}_{\mathrm{MS}} = (\mathcal{G}_f, \mathcal{G}_c)$, construct collision-free, topology-distinct SCSs connecting $\mathcal{L}_s$ and $\mathcal{L}_t$.
\hfill $\blacksquare$
\end{problem}

\subsubsection{Corridor Exploration}
\label{sec:corridor-discovery}

Corridors are generated by directly searching for SCSs on the C-GCS without predefined reference paths.
The C-GCS provides a compact abstraction of global free-space connectivity, reducing the search space.
Specifically, we search for
$
\mathcal{L}^c\in\operatorname{Paths}(\mathcal{G}_c,v_s,v_t),
$
where $\operatorname{Paths}(\mathcal{G}_c,v_s,v_t)$ denotes the SCSs induced by $v_s$-to-$v_t$ paths in the augmented C-GCS.
Because the C-GCS is constructed from mutually invisible roots and collision-free convex aggregation, search on the C-GCS promotes topological diversity and reduces redundancy among candidate corridors.
The compact C-GCS therefore enables efficient global corridor exploration.
However, when a local map update invalidates the current corridor, rapid fine-scale re-evaluation is necessary.

\subsubsection{Homotopy Reasoning and Corridor Fusion}
\label{sec:homotopy-fusion}

When environmental updates occur, the F-GCS is repaired before the C-GCS is reconstructed, making it natural to re-evaluate corridors on $\mathcal{G}_f$ and attempt to preserve the topology of the invalidated corridor.
While coarse-scale search provides diverse corridors, fine-scale replanning may generate multiple geometrically different yet topologically redundant corridors. 
To remove such redundancy while preserving corridor topology, we perform corridor-level UVD reasoning.

\begin{definition}
\label{def:uvd-between-scs}
Two SCSs with the same singleton source and target sets are uniform visibility deformable, denoted $\mathcal{L}^{(1)} \sim_{\mathrm{UVD}} \mathcal{L}^{(2)}$, if and only if there exists a common collision-free covering SCS into which the convex sets of each original SCS are embedded monotonically, with consistently aligned intersections between consecutive convex sets.
\hfill $\blacksquare$
\end{definition}

Intuitively, Def.~\ref{def:uvd-between-scs} imposes a corridor-level analogue of uniformity by requiring a convex-set covering that preserves both the ordering of the embedded sets and the alignment of consecutive intersections, as illustrated in Fig.~\ref{fig:VD-illustration} (center).
Given the locally searched fine-scale SCSs $\{\mathcal{L}^{f,r}\}$, we incrementally construct a fused set $\mathscr{L}^{\ast}$. Each candidate SCS is compared against the current representatives in $\mathscr{L}^{\ast}$ using the pairwise UVD criterion in Theorem~\ref{thm:uvd-scs}. If the criterion yields a common fused SCS $\hat{\mathcal{L}}$ for the candidate and an existing representative, the representative is replaced by $\hat{\mathcal{L}}$; otherwise, the candidate is inserted as a new representative. This greedy fusion removes topologically redundant corridors while retaining one representative for each remaining fused group under the traversal order. The following theorem provides the pairwise criterion for this fusion step and constructively yields the common fused SCS, as illustrated in Fig.~\ref{fig:VD-illustration} (right).

\begin{theorem}
\label{thm:uvd-scs}
Let 
$
\mathcal{L}^{(l)} = 
(\mathcal{L}^{(l)}_1,\dots,\mathcal{L}^{(l)}_{m_l}), 
\ l\in\{1,2\}.
$
Then 
$
\mathcal{L}^{(1)} \sim_{\mathrm{UVD}} \mathcal{L}^{(2)}
$
if and only if there exist a collision-free SCS
$
\mathcal{L}^\ast = (\mathcal{L}^\ast_1,\dots,\mathcal{L}^\ast_p)
$
and an index-coupling sequence
$
\{(i_k, j_k)\}_{k=1}^p \subseteq \{1, \ldots, m_1\} \times \{1, \ldots, m_2\}
$
such that:
\begin{enumerate}
    \item[(i)] Boundary-complete monotone coupling:\\
    $ (i_1,j_1)=(1,1),\, (i_p,j_p)=(m_1,m_2),$
    and \\
    $ (i_{k+1}-i_k,j_{k+1}-j_k) \in \{(1,0),(0,1),(1,1)\}. $
    
    \item[(ii)] Pairwise covering:
    $\mathcal{L}^{(1)}_{i_k} \cup \mathcal{L}^{(2)}_{j_k} \subseteq\mathcal{L}_k^\ast.$
    
    \item[(iii)] Consistent intersection alignment:\\
    $\mathcal L^\ast_k \cap \mathcal L^\ast_{k+1}
    \;\supseteq\;
    \big(
    \mathcal L^{(1)}_{i_k} \cap \mathcal L^{(1)}_{i_{k+1}}
    \big)
    \;\cup\;
    \big(
    \mathcal L^{(2)}_{j_k} \cap \mathcal L^{(2)}_{j_{k+1}}
    \big)$.
\end{enumerate}
\end{theorem}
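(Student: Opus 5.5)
The natural route is to treat Def.~\ref{def:uvd-between-scs} as an informal statement and formalize each of its three clauses. We then prove the two implications by reading conditions (i)--(iii) as exactly that formalization. The words \emph{common collision-free covering SCS} correspond to the existence of $\mathcal{L}^\ast$ with every $\mathcal{L}^\ast_k\subseteq\mathcal{F}$ together with the covering condition (ii). \emph{Embedded monotonically} corresponds to the existence of nondecreasing maps $k\mapsto i_k$ and $k\mapsto j_k$, which are onto the index sets and fix the endpoints. \emph{Consistently aligned intersections} corresponds to (iii). The main work is checking that these translations are faithful in both directions. A second check is that $\mathcal{L}^\ast$ so obtained is a genuine SCS, that is, consecutive sets intersect and it runs from $\mathcal{L}_s$ to $\mathcal{L}_t$.

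For sufficiency, assume $\mathcal{L}^\ast$ and $\{(i_k,j_k)\}$ satisfy (i)--(iii). By (i) both index maps are nondecreasing, they start at $(1,1)$, and they end at $(m_1,m_2)$. Since the increments are at most one, each map is surjective. Hence every $\mathcal{L}^{(l)}_i$ is embedded via (ii), and the order of embedding is monotone. Consecutive intersection is shown in two cases. If $i_{k+1}=i_k+1$, the intersection $\mathcal{L}^{(1)}_{i_k}\cap\mathcal{L}^{(1)}_{i_k+1}$ is nonempty because $\mathcal{L}^{(1)}$ is an SCS, and by (iii) it lies in $\mathcal{L}^\ast_k\cap\mathcal{L}^\ast_{k+1}$. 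If instead $i_{k+1}=i_k$, then $\mathcal{L}^{(1)}_{i_k}$ lies in both sets by (ii). So $\mathcal{L}^\ast$ is an SCS. Because the source and target singletons are shared, (ii) at $k=1$ and $k=p$ gives the right endpoints. Condition (iii) then states literally that each consecutive intersection of an original corridor is carried by the matching consecutive intersection of $\mathcal{L}^\ast$, which is the alignment clause.

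For necessity, start from the covering SCS $\mathcal{L}^\ast$ and the monotone embeddings $\phi_l$ given by the definition, where $\phi_l$ sends the indices of $\mathcal{L}^{(l)}$ to those of $\mathcal{L}^\ast$. Define the coupling by merging the two index sequences in the order of the staircase (lattice) path they induce. Where a single $\mathcal{L}^\ast_k$ absorbs several consecutive sets of one corridor, it has to be split, or else duplicated so that one entry appears per coupling step. Duplicating a convex set keeps the sequence collision-free, and consecutive copies trivially intersect.

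This step is the main obstacle. Monotonicity in the definition may let $\phi_l$ skip indices of $\mathcal{L}^\ast$ or map several sets to the same $\mathcal{L}^\ast_k$. We must therefore produce steps in $\{(1,0),(0,1),(1,1)\}$ while keeping (ii) and (iii). The approach is to first refine $\mathcal{L}^\ast$ by repeating entries, so that both embeddings advance by at most one per step. Next, discard sets of $\mathcal{L}^\ast$ that are not images of either corridor, or absorb them into neighbours; their contents are irrelevant to (ii), and the alignment hypothesis keeps the needed intersections inside the retained neighbours. Finally, verify that each repeated block $\mathcal{L}^\ast_k=\mathcal{L}^\ast_{k+1}$ satisfies (iii) trivially, since it contains both covered sets. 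The nonrepeated transitions then satisfy (iii) by the alignment clause of the definition.
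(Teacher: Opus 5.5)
Your sufficiency direction is fine and matches the paper's. The gap is in necessity: you never establish (ii) there. Your two paragraphs also use inconsistent formalizations. In the first, ``embedded monotonically'' means onto maps $k\mapsto i_k$, $k\mapsto j_k$ with (ii) built in, and under that reading necessity is nearly immediate: onto nondecreasing maps have increments in $\{0,1\}$, so only repeated $(0,0)$ steps need deleting. In the necessity paragraph you switch to maps $\phi_l$ from the indices of $\mathcal{L}^{(l)}$ into those of $\mathcal{L}^\ast$ that may skip. Under that reading, nothing forces a coupled pair into one common set.

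When you merge the two image sequences and some $\mathcal{L}^\ast_k$ is the image of only one corridor, you get steps with $\phi_1(i_k)\neq\phi_2(j_k)$. Then $\mathcal{L}^{(1)}_{i_k}$ and $\mathcal{L}^{(2)}_{j_k}$ sit in different witness sets, and nothing supplies a collision-free convex set containing both. Repeating an entry does not add the missing set. Discarding only handles sets hit by neither corridor. Absorbing into neighbours by unions or hulls can create collisions. Your phrase ``since it contains both covered sets'' assumes exactly what must be shown.

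In fact, with skipping allowed the implication is false. Take the open square obstacle $(\delta,2-\delta)^2$ with $0<\delta<1$, $q_0=(0,0)$, $q_T=(2,2)$. Let $A,B,C,D$ be the closed width-$\delta$ strips along the left, top, bottom and right sides of $[0,2]^2$. Both $(\{q_0\},A,B,\{q_T\})$ and $(\{q_0\},C,D,\{q_T\})$ embed monotonically, with endpoints fixed, into the collision-free SCS $(A,B,D,C,D)$ via $\phi_1=(1,1,2,5)$ and $\phi_2=(1,4,5,5)$. Yet every index pair except $(1,1),(1,2),(2,1),(3,4),(4,3),(4,4)$ has a union whose convex hull contains the segment from $(0,0)$ to $(2,2)$ or from $(0,2)$ to $(2,0)$, and hence meets the obstacle. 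So every coupling satisfying (i)--(ii) is stuck at $(1,2)$ or $(2,1)$.

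The missing idea is that the witness must be synchronized. Each $\mathcal{L}^\ast_k$ must receive a nonempty consecutive block from \emph{both} corridors, that is, both embeddings are onto $\{1,\dots,p\}$, in the spirit of the second clause of Def.~\ref{def:vd-between-scs}. The paper uses exactly this when it segments the witness and asserts that each coupled pair lies in a common collision-free convex witness set. Given synchronization, you walk through each block by unit steps and cross to the next block by a $(1,1)$ step. You then set $\mathcal{L}^\ast_k=\operatorname{Conv}(\mathcal{L}^{(1)}_{i_k}\cup\mathcal{L}^{(2)}_{j_k})$, which is collision-free because it lies inside a convex witness set; duplicating the witness set itself would also work.

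Note also that (iii) needs no alignment hypothesis: it follows from (ii) alone, since $\mathcal{L}^{(1)}_{i_k}\cap\mathcal{L}^{(1)}_{i_{k+1}}\subseteq\mathcal{L}^\ast_k\cap\mathcal{L}^\ast_{k+1}$, and likewise for the second corridor. So the real content of necessity is securing (ii), not the step-size bookkeeping you identified as the main obstacle.
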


\noindent\emph{Proof sketch.}
If a collision-free SCS satisfying conditions (i)--(iii) exists, it provides a jointly covering SCS with monotone embeddings and aligned intersections, and the two SCSs are UVD by Def.~\ref{def:uvd-between-scs}.
Conversely, if two SCSs are UVD, a jointly covering collision-free SCS exists by definition. 
By segmenting this witness according to changes in the indices of the two original SCSs, one can obtain an index-coupling sequence satisfying (i).
A realization follows by forming convex hulls of coupled convex-set pairs 
$ \mathcal L^\ast_k \triangleq \operatorname{Conv}(\mathcal{L}^{(1)}_{i_k} \cup \mathcal{L}^{(2)}_{j_k})$.
Because each coupled pair is contained in a common collision-free convex witness set, its convex hull remains collision-free and contains both coupled sets, satisfying (ii).
Condition (iii) follows because each original consecutive intersection is contained in both adjacent fused sets.
Therefore, the criterion is both necessary and sufficient.
\hfill $\blacksquare$

\subsection{Planning Framework and Complexity Analysis} \label{sec:overall-framework}

\subsubsection{Planning Pipeline}
Fig.~\ref{fig:framework} and Alg.~\ref{alg:overall-framework} summarize the proposed overall planning framework. 
In the offline stage, the multi-scale GCS $\mathcal{G}_{\mathrm{MS}} = (\mathcal{G}_f, \mathcal{G}_c)$ is built from the prior map $\mathcal{O}_{\mathrm{prior}}$. 
$\mathcal{G}_{\mathrm{MS}}$ is augmented with $v_s$ and $v_t$ according to Sec.~\ref{sec:problem-formulation}, and global corridor discovery is performed on $\mathcal{G}_c$ to identify candidate corridors $\{\mathcal{L}^{c,r}\}$. 
The downstream MINCO trajectory optimizer~\cite{WANG2022GCOPTER} is then used to generate multiple candidate trajectories from $\{\mathcal{L}^{c,r}\}$, and one is selected based on a user-defined cost. 
During online execution, local obstacle updates trigger hierarchical updates to both $\mathcal{G}_f$ and $\mathcal{G}_c$. 
If the current trajectory remains valid under the updated map, execution continues without replanning.
If the current trajectory becomes invalid, fine-scale SCSs $\{\mathcal{L}^{f,r}\}$ are obtained by local search on the updated $\mathcal{G}_f$.
If feasible local candidates are found, they are fused into $\mathscr{L}^{\ast}$ using the UVD-based criterion; otherwise, global replanning is triggered on the updated $\mathcal{G}_c$. 
A new trajectory is then generated from the resulting corridor set.
This hierarchical strategy prioritizes fine-scale local corridor repair and invokes coarse-scale global search only when no feasible local corridor is found.

\subsubsection{Computational Complexity}
Let $N_f$ and $N_c$ be the numbers of fine- and coarse-scale convex sets.
Pairwise F-GCS adjacency construction costs $O(N_f^2)$, while root selection during C-GCS construction costs at most $O(N_fN_cC_{\mathrm{vis}})$, where $C_{\mathrm{vis}}$ is the cost of one visibility test.
Each coarse set examines at most $k_f\ll N_f$ fine nodes. 
Let $C_{\mathrm{col}}(N_v)$ and $H_d(N_v)$ denote the costs of one collision check and one $d$-dim convex-hull construction, respectively.
Thus, coarse-set generation costs
$
O\!\left(N_c[k_fC_{\mathrm{col}}(N_v)+H_d(N_v)]\right),
$
and pairwise C-GCS adjacency construction costs
$
O(N_c^2).
$
Online updates replace the global counts with local counts in $\mathcal R$; under bounded sensing range, local set density, and spatially restricted adjacency checks, their cost is independent of the global graph size.

\begin{algorithm}[t]
\caption{Overall Planning Framework}
\label{alg:overall-framework}

\KwIn{Prior map $\mathcal{O}_{\mathrm{prior}}$, resolution $\varepsilon$, start $q_0$, goal $q_T$}
\KwOut{Executable trajectory $\xi(t)$}

$\mathcal{G}_\mathrm{MS} = (\mathcal{G}_f, \mathcal{G}_c) \leftarrow$ \textsc{MaintainMSGCS}($\mathcal{O}_{\mathrm{prior}}, \varepsilon$) \label{alg-line:build-multi-scale-gcs}\;
$\{\mathcal{L}^{c,r}\} \leftarrow$ BFS on $\mathcal{G}_c$ from $v_s$ to $v_t$\label{alg-line:search-initial-corridors}\;
$\xi(t) \leftarrow$ Generate trajectory from $\{\mathcal{L}^{c,r}\}$\label{alg-line:initial-trajectory}\;

\While{not finished}{
    $\Delta\mathcal{O} \leftarrow$ Update local obstacle information\label{alg-line:sense-environment}\;
    $\mathcal{G}_\mathrm{MS} \leftarrow$ \textsc{MaintainMSGCS}($\mathcal{O}_{\mathrm{prior}}, \varepsilon, \mathcal{G}_\mathrm{MS}, \Delta\mathcal{O}$)\label{alg-line:hierarchical-update}\;

    \If{$\xi(t)$ is invalid under the updated map}{
        $\{\mathcal{L}^{f,r}\} \leftarrow$ Local search on the updated $\mathcal{G}_f$\label{alg-line:search-fine-corridors}\;

        \eIf{$\{\mathcal{L}^{f,r}\} \neq \emptyset$}{
            $\mathscr{L}^{\ast} \leftarrow$ UVD-based fusion of $\{\mathcal{L}^{f,r}\}$\;}
        {
            $\mathscr{L}^{\ast} \leftarrow$ Global search on the updated $\mathcal{G}_c$\;
        }

        $\xi(t) \leftarrow$ Generate trajectory from $\mathscr{L}^{\ast}$\label{alg-line:traj-replan}\;
    }

    Execute the next segment of $\xi(t)$\label{alg-line:execute-next}\;
}

\end{algorithm}
\section{Experimental Validation}

\subsection{Experimental Setup}

All numerical experiments were run on an Intel Core i7-13700K (3.4\,GHz) desktop with 32\,GB RAM.
The host OS was Ubuntu 20.04; Drake-based methods~\cite{drake} ran in an Ubuntu 24.04 Docker container.
C++ implementations used C++17, and Python baselines used Python~3.
Convex-set representations and convex-hull operations used Drake's geometry optimization module, and collision checking used the GJK algorithm~\cite{gilbert2002fast}.
For fairness, all trajectory-level comparisons used the same MINCO trajectory optimizer~\cite{WANG2022GCOPTER}.
For trajectory-level evaluation, each optimized MINCO trajectory was sampled at $\Delta t=0.02\,\mathrm{s}$ to obtain discrete positions $\{\mathbf{p}_i\}_{i=0}^{N}$. We define geometric smoothness as the accumulated turning-angle cost
$
J_{\mathrm{ang}}=\sum_{i=1}^{N-1}\theta_i^2,\quad
\theta_i =
\arccos\left(
\frac{\mathbf{e}_i^\top \mathbf{e}_{i+1}}
{\|\mathbf{e}_i\|_2\|\mathbf{e}_{i+1}\|_2}
\right),
$
where  $\mathbf{e}_i=\mathbf{p}_i-\mathbf{p}_{i-1}$. The metric is reported in $\mathrm{rad}^2$, with lower values indicating smoother directional changes.
A video accompanies this paper as multimedia material.

\subsection{Numerical Experiments}

\subsubsection{GCS Construction Comparison}

\begin{table*}[!t]
\centering
\caption{
GCS Construction Comparison Across Different Environments (Averaged Over 10 Maps)
}
\vspace{-0.2cm}
\label{tab:gcs_construction}
\small
\setlength{\tabcolsep}{4pt}
\begin{tabular}{ll|c|ccc|ccc|ccc|ccc}
\toprule
\multicolumn{2}{c|}{\textbf{Scenario}}
& \multicolumn{1}{c|}{\textbf{Coverage}} 
& \multicolumn{3}{c|}{\textbf{Runtime (s)}} 
& \multicolumn{3}{c|}{\textbf{Average Degree}} 
& \multicolumn{3}{c|}{\textbf{Degeneracy}} 
& \multicolumn{3}{c}{\textbf{Regions}} \\

\multicolumn{2}{c|}{} 
& {} 
& Ours & VCC & R-IRIS 
& Ours & VCC & R-IRIS 
& Ours & VCC & R-IRIS
& Ours & VCC & R-IRIS \\
\midrule

\multirow{3}{*}{2-D Clutter}
& Sparse  & 0.95  & \textbf{0.091} & 0.840 & 0.149 & \textbf{2.889} & 8.031 & 4.961 & \textbf{2.0} & 6.8 & 3.9 & 19.6 & 19.4 & \textbf{13.8} \\
& Medium  & 0.90  & \textbf{0.157} & 1.249 & 0.807 & \textbf{2.730} & 6.958 & 4.147 & \textbf{2.2} & 8.0 & 4.1 & 44.0 & 36.4 & \textbf{28.3}  \\
& Dense   & 0.85   & \textbf{0.276} & 1.368 & 1.960 & \textbf{2.392} & 4.944 & 3.933 & \textbf{2.2} & 6.9 & 3.9 & 53.8 & 42.9 & \textbf{41.8} \\
\midrule

\multirow{3}{*}{3-D Clutter}
& Sparse  & 0.90 & 0.443 & 5.942 & \textbf{0.291} & \textbf{7.032} & 49.644 & 8.533 & \textbf{5.0} & 44.1 & 6.3 & 32.5 & 55.3 & \textbf{15.3} \\
& Medium  & 0.90  & \textbf{1.927} & 13.555 & 2.467 & \textbf{9.244} & 102.055 & 13.138 & \textbf{6.4} & 78.4 & 9.0 & 95.5 & 166.1 & \textbf{46.7}  \\
& Dense   & 0.85  & \textbf{2.478} & 13.275 & 5.811 & \textbf{9.051} & 74.755 & 12.748 & \textbf{6.4} & 59.0 & 8.9 & 133.9 & 178.9 & \textbf{68.2} \\
\midrule

\multirow{3}{*}{2-D Maze}
& Wide    & 0.95  & \textbf{0.268} & 1.688 & 2.728 & \textbf{1.951} & 3.747 & 2.572 & \textbf{1.5} & 4.4 & 2.7 & \textbf{17.1} & 38.1 & 32.7 \\
& Medium  & 0.95 & \textbf{0.663} & 2.492 & 8.650 & \textbf{2.035} & 3.715 & 2.575 & \textbf{1.8} & 5.5 & 3.1 & \textbf{32.0} & 63.5 & 62.7  \\
& Narrow  & 0.90 & \textbf{0.963} & 4.898 & 23.859 & \textbf{1.996} & 2.611 & 2.294 & \textbf{1.6} & 4.4 & 2.9 & \textbf{68.6} & 99.7 & 95.7  \\
\bottomrule
\end{tabular}
\end{table*}

\begin{figure}[!t]
  \centering
  \includegraphics[width=0.90\linewidth]{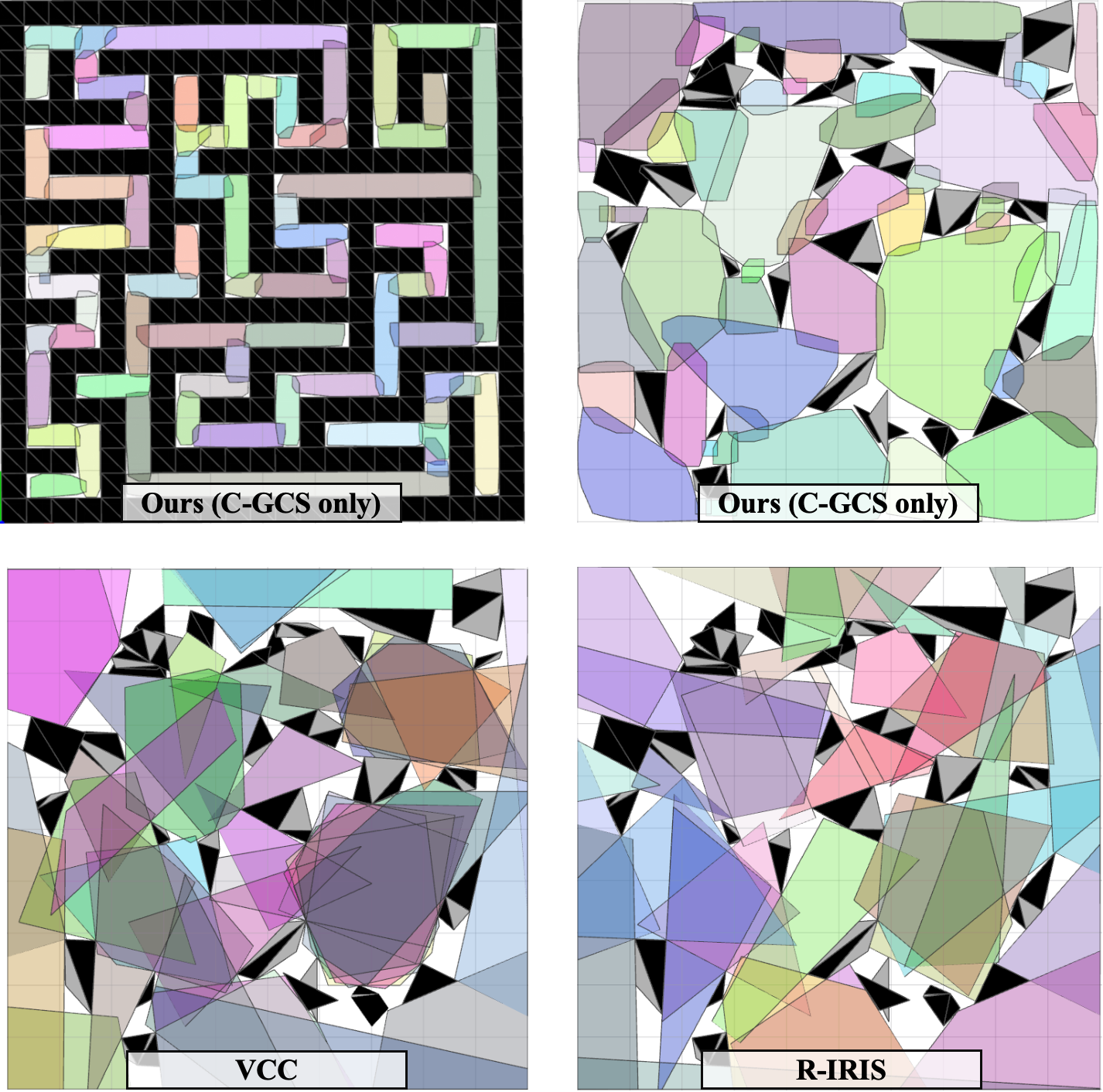}

  \vspace{-0.2cm}

  \caption{
  Visualization of GCS construction in 2-D maze (narrow) and clutter (medium-density). \textbf{Top:} Our method (showing C-GCS only) generates a structured and compact GCS in both environments. \textbf{Bottom:} A comparison with VCC and R-IRIS in cluttered environments shows our method's efficiency in capturing free space topology and reducing redundant overlaps.
  }
  
  \vspace{-0.4cm}

  \label{fig:gcs_visual_comparison}
\end{figure}

We compare our multi-scale GCS (Sec.~\ref{sec:multi-scale-gcs}) with two Drake-based baselines, VCC~\cite{werner2024approximating} and random-seed IRIS (R-IRIS)~\cite{deits2015computing}.
We test nine $10\,\mathrm{m}$-per-side 2-D/3-D environments, varying obstacle density in cluttered scenes and corridor width in mazes (Fig.~\ref{fig:gcs_visual_comparison}).
For fairness, free-space coverage is matched across methods, and all results are averaged over 10 generated maps.
Our method achieves the lowest time in 8 of 9 settings, yielding a $3.8$--$13.4\times$ speedup over VCC in all settings and a $1.3$--$24.8\times$ speedup over R-IRIS in all but one case. 
Average degree and degeneracy are reduced by about $24\%$--$91\%$ and $64\%$--$92\%$ relative to VCC, and by $13\%$--$42\%$ and $21\%$--$49\%$ relative to R-IRIS, indicating a markedly sparser graph for corridor search (see the direct visual comparison in Fig.~\ref{fig:gcs_visual_comparison}).
Overall, our method achieves efficient GCS construction while maintaining a substantially sparser graph for corridor exploration.

\subsubsection{Corridor Generation}

We evaluate corridor generation in ten controlled single-topology 2-D environments with different turn numbers and amplitudes, isolating corridor construction from topological exploration (e.g., Fig.~\ref{fig:corridor_generation_qual}).
We compare reference-path-based IMPC~\cite{chen2023MultiRobot} and SFC~\cite{liu2017planning} with GCS-based R-IRIS~\cite{deits2015computing}, VCC~\cite{werner2024approximating}, and our method.
IMPC and SFC use the same A* reference path with an additional obstacle-inflation margin $\delta_{A^*}\in\{0.05,0.20\}\,\mathrm{m}$, while GCS-based methods extract corridors from the constructed GCS using the same BFS.
Table~\ref{tab:corridor_generation_metrics} shows that reference-path-based baselines are sensitive to $\delta_{A^*}$, which changes the average duration by $+10.0\%$ for IMPC and $-9.7\%$ for SFC.
Without any reference-path tuning, our method remains within $2.0\%$ of the best IMPC duration and is $1.8\%$ and $10.0\%$ shorter than R-IRIS and VCC, respectively.
These results show competitive trajectory-level performance without an A* reference path or an additional reference-path inflation margin.

\begin{table}[t]
\centering
\begin{threeparttable}
\caption{Trajectory-level comparison for corridor generation over ten single-topology 2-D environments. Values are mean $\pm$ std.}
\vspace{-0.2cm}
\label{tab:corridor_generation_metrics}
\footnotesize
\setlength{\tabcolsep}{3.2pt}
\begin{tabular}{l c c c c}
\toprule
\textbf{Method}
& \textbf{$\delta_{A^*}$\textsuperscript{1} (m)}
& \textbf{Duration (s)}
& \textbf{Dur. Ratio\textsuperscript{2}}
& \textbf{Mean Speed (m/s)} \\
\midrule
\multicolumn{5}{c}{\textit{Reference-path-based (A* + corridor generation)}} \\
\midrule
SFC & 0.05 & $31.33 \pm 4.88$ & $1.116 \pm 0.062$ & $0.826 \pm 0.052$ \\
SFC & 0.20 & $28.29 \pm 4.26$ & $1.008 \pm 0.025$ & $0.881 \pm 0.036$ \\
IMPC & 0.05 & $\mathbf{27.49 \pm 4.46}$ & $\mathbf{0.979 \pm 0.033}$ & $\mathbf{0.901 \pm 0.024}$ \\
IMPC & 0.20 & $30.24 \pm 4.40$ & $1.078 \pm 0.020$ & $0.846 \pm 0.037$ \\
\midrule
\multicolumn{5}{c}{\textit{GCS-based (GCS + corridor extraction)}} \\
\midrule
VCC & -- & $31.15 \pm 5.24$ & $1.117 \pm 0.178$ & $0.842 \pm 0.077$ \\
R-IRIS & -- & $28.54 \pm 5.10$ & $1.014 \pm 0.047$ & $0.877 \pm 0.032$ \\
Ours & -- & $28.03 \pm 3.91$ & $1.000 \pm 0.000$ & $0.889 \pm 0.043$ \\
\bottomrule
\end{tabular}

\vspace{0.03cm}
\begin{tablenotes}[flushleft]
\footnotesize
\item[1] $\delta_{A^*}$ denotes the additional obstacle inflation margin used in A* reference-path search; ``--'' indicates not applicable.
\item[2] Duration ratios are computed per environment relative to Ours and averaged.
\end{tablenotes}
\end{threeparttable}
\vspace{-0.35cm}
\end{table}

\begin{figure}[!t]
  \centering
  \includegraphics[width=0.9\linewidth]{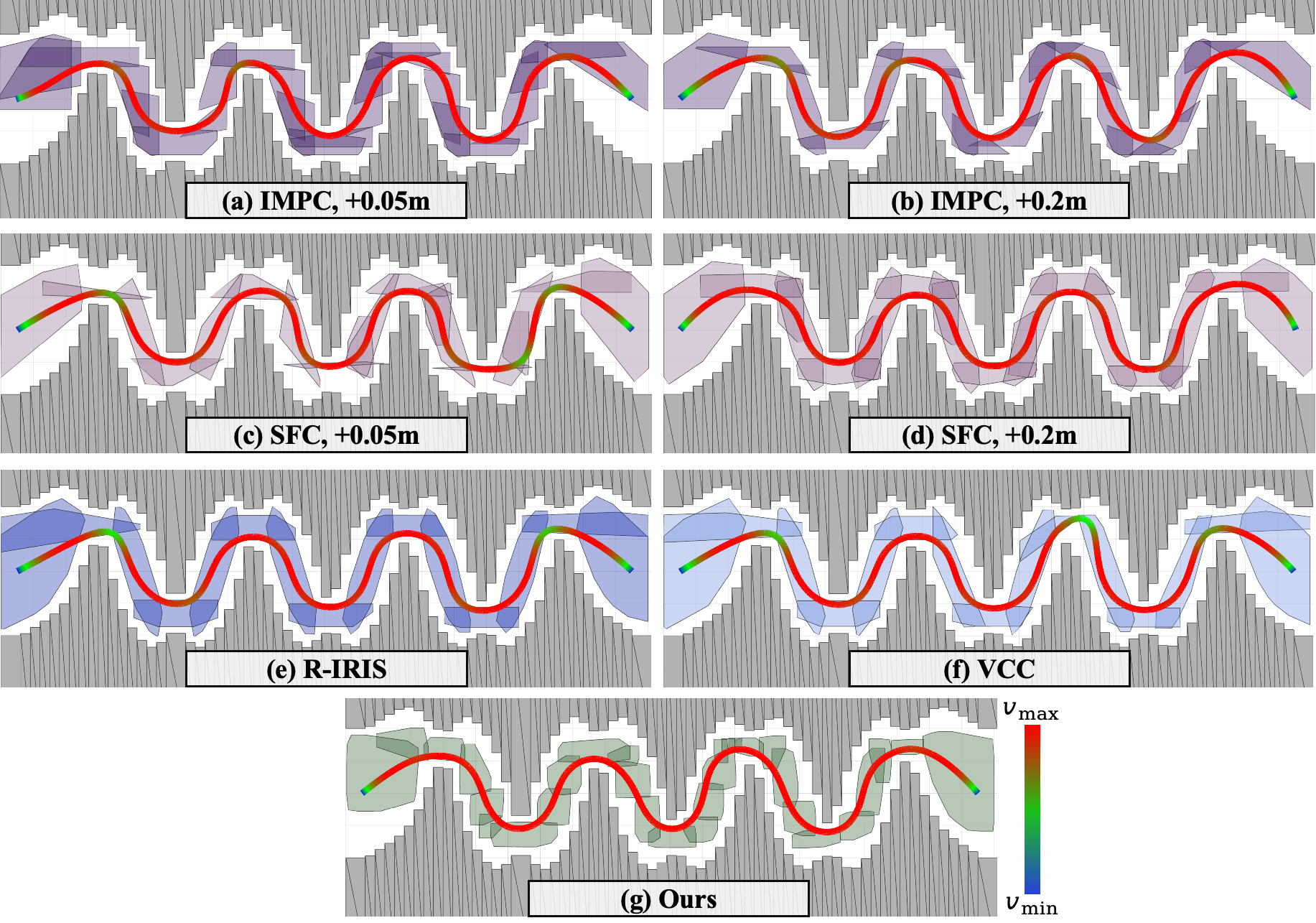}

  \vspace{-0.2cm}

  \caption{
  Corridor generation and resulting trajectories in a single-topology environment.
  Reference-path-based methods (IMPC/SFC) use additional inflation during A* path search to avoid overly boundary-hugging references that can yield degenerate corridors.
  GCS-based methods (R-IRIS/VCC/Ours) extract a corridor via graph search on the constructed GCS without additional inflation tuning.
  }

  \vspace{-0.4cm}

  \label{fig:corridor_generation_qual}
\end{figure}

\subsubsection{Homotopy Awareness} \label{sec:homotopy-awareness}

We further evaluate homotopy-aware corridor exploration over ten 2-D and ten 3-D multi-topology trials based on the environment families illustrated in Fig.~\ref{fig:all_results} (top left, bottom). The 2-D trials use five perturbed obstacle layouts with two diagonal start-goal pairs each, while the 3-D trials vary wall-opening locations with fixed start and goal. We compare SFC~\cite{liu2017planning}, R-IRIS~\cite{deits2015computing}, VCC~\cite{werner2024approximating}, and our method; for GCS-based methods, the top 10 BFS corridors are optimized and the shortest-duration trajectory is selected.
As shown in Table~\ref{tab:homotopy_results}, our method achieves the shortest duration and highest mean speed in both 2-D and 3-D, reducing duration by $13.7$--$19.8\%$ in 2-D and $12.3$--$16.4\%$ in 3-D relative to the baselines. Although SFC yields the shortest geometric length by following a shortest A* path, this does not translate into the fastest dynamically feasible trajectory. These results indicate that our method more reliably identifies topology-distinct corridors that are also dynamically favorable.

\begin{table}[t]
\centering
\caption{Homotopy-aware corridor exploration results in 2-D and 3-D multi-topology environments. Values are mean $\pm$ std.}
\vspace{-0.2cm}
\label{tab:homotopy_results}
\footnotesize
\setlength{\tabcolsep}{2.8pt}
\begin{tabular}{c c c c c c}
\toprule
\textbf{}
& \textbf{Method}
& \textbf{Dur. (s)}
& \textbf{L. (m)}
& \textbf{M.S. (m/s)}
& \textbf{$J_{\mathrm{ang}}$ ($\mathrm{rad}^2$)} \\
\midrule

\multirow{4}{*}{2-D}
& SFC
& $38.7 \pm 9.8$
& $\mathbf{28.0 \pm 1.5}$
& $0.75 \pm 0.13$
& $0.027 \pm 0.023$ \\

& R-IRIS
& $38.0 \pm 3.2$
& $30.2 \pm 1.9$
& $0.80 \pm 0.03$
& $0.021 \pm 0.008$ \\

& VCC
& $40.9 \pm 5.7$
& $30.7 \pm 3.1$
& $0.76 \pm 0.05$
& $\mathbf{0.017 \pm 0.013}$ \\

& Ours
& $\mathbf{32.8 \pm 1.7}$
& $28.5 \pm 1.1$
& $\mathbf{0.87 \pm 0.05}$
& $0.023 \pm 0.013$ \\

\midrule

\multirow{4}{*}{3-D}
& SFC
& $28.1 \pm 5.4$
& $\mathbf{19.3 \pm 0.7}$
& $0.71 \pm 0.11$
& $0.045 \pm 0.040$ \\

& R-IRIS
& $26.9 \pm 3.2$
& $20.4 \pm 1.3$
& $0.77 \pm 0.07$
& $0.026 \pm 0.020$ \\

& VCC
& $26.8 \pm 2.6$
& $20.9 \pm 1.6$
& $0.78 \pm 0.04$
& $0.023 \pm 0.015$ \\

& Ours
& $\mathbf{23.5 \pm 1.0}$
& $19.9 \pm 0.8$
& $\mathbf{0.85 \pm 0.02}$
& $\mathbf{0.018 \pm 0.008}$ \\

\bottomrule
\end{tabular}

\vspace{0.1cm}
\parbox{0.98\columnwidth}{\footnotesize Dur. = Duration, L. = Length, M.S. = Mean Speed.}
\vspace{-0.25cm}
\end{table}

\subsubsection{Local-Update Characterization}

\begin{figure}[!t]
  \centering
  \includegraphics[width=1.0\linewidth]{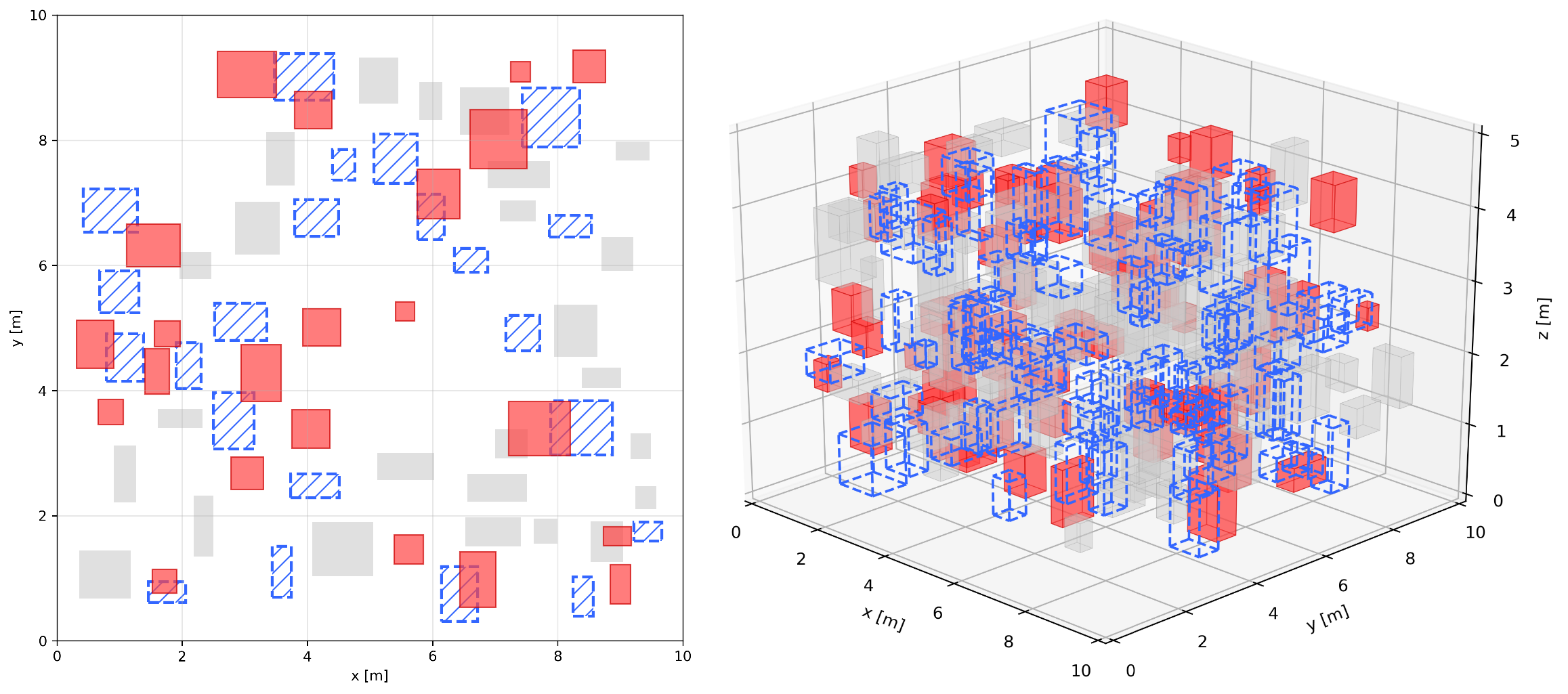}

  \vspace{-0.2cm}

  \caption{ 
  Randomly generated dense, large-mismatch environments used for local-update characterization.
  \textbf{Left:} 2-D case. \textbf{Right:} 3-D case.
  Gray obstacles are shared by the prior and actual maps; red obstacles are newly added to the actual map; blue dashed obstacles exist only in the prior map.
  }

  \vspace{-0.2cm}

  \label{fig:local_update_env}
\end{figure}

\begin{table}[t]
\centering
\begin{threeparttable}
\caption{Local-update characterization. Values are means over five trials.}
\vspace{-0.2cm}
\label{tab:local_update}
\footnotesize
\setlength{\tabcolsep}{3.0pt}
\begin{tabular}{c c c c c}
\toprule
\textbf{Dim.}
& \textbf{Setting}\textsuperscript{1}
& \textbf{$T_{\mathrm{local}}$ (s)}
& \textbf{Speedup}\textsuperscript{2}
& \textbf{Added FCS}\textsuperscript{3} \\
\midrule
\multirow{3}{*}{2-D}
& Small mismatch
& $0.020$--$0.023$
& $19.2$--$24.0\times$
& $73$--$86$ \\
& Medium mismatch
& $0.026$--$0.037$
& $11.5$--$17.5\times$
& $78$--$149$ \\
& Large mismatch
& $0.042$--$0.063$
& $6.4$--$10.1\times$
& $174$--$258$ \\
\midrule
\multirow{4}{*}{3-D}
& Med./small
& $0.287$
& $3.49\times$
& $199$ \\
& Med./medium
& $0.359$
& $4.06\times$
& $219$ \\
& Med./large
& $0.604$
& $1.73\times$
& $395$ \\
& Dense/large
& $0.637$
& $1.72\times$
& $233$ \\
\midrule
2-D
& $r_s:1.0{\rightarrow}3.0\,\mathrm{m}$
& $0.013{\rightarrow}0.050$
& $27.4{\rightarrow}7.1\times$
& $42{\rightarrow}239$ \\
\bottomrule
\end{tabular}

\vspace{0.03cm}
\begin{tablenotes}[flushleft]
\footnotesize
\item[1] The first three 2-D rows report ranges of five-trial means over sparse, medium, and dense maps. For 3-D, settings list density/mismatch. The radius row uses medium density and medium mismatch.
\item[2] Speedup is computed against complete F-GCS/C-GCS reconstruction.
\item[3] FCS denotes newly added fine-scale convex sets.
\end{tablenotes}
\end{threeparttable}
\vspace{-0.2cm}
\end{table}

In Table~\ref{tab:local_update}, we characterize the adaptive local-update regime by varying obstacle density, environment mismatch, and sensing radius.
Because the update region is bounded by the sensing radius, dense or large-mismatch cases (Fig.~\ref{fig:local_update_env}) may reconstruct most of the affected local subgraph, but not the complete global F-GCS/C-GCS unless the sensing region covers the entire workspace.
Across the 2-D density/mismatch sweep, local updates take $0.020$--$0.063\,\mathrm{s}$, while speedup over complete reconstruction decreases from $19.2$--$24.0\times$ for small mismatch to $6.4$--$10.1\times$ for large mismatch.
Increasing the sensing radius from $1.0$ to $3.0\,\mathrm{m}$ raises the added FCS count from $42$ to $239$ and reduces speedup from $27.4\times$ to $7.1\times$.
In 3-D, speedup remains $1.73\times$ for medium-density large mismatch and $1.72\times$ in the dense--large setting.
Thus, local updates are most effective for smaller affected regions but remain faster than complete reconstruction in all tested dense and large-mismatch settings.

\begin{figure}[!t]
  \centering
  \includegraphics[width=0.95\linewidth]{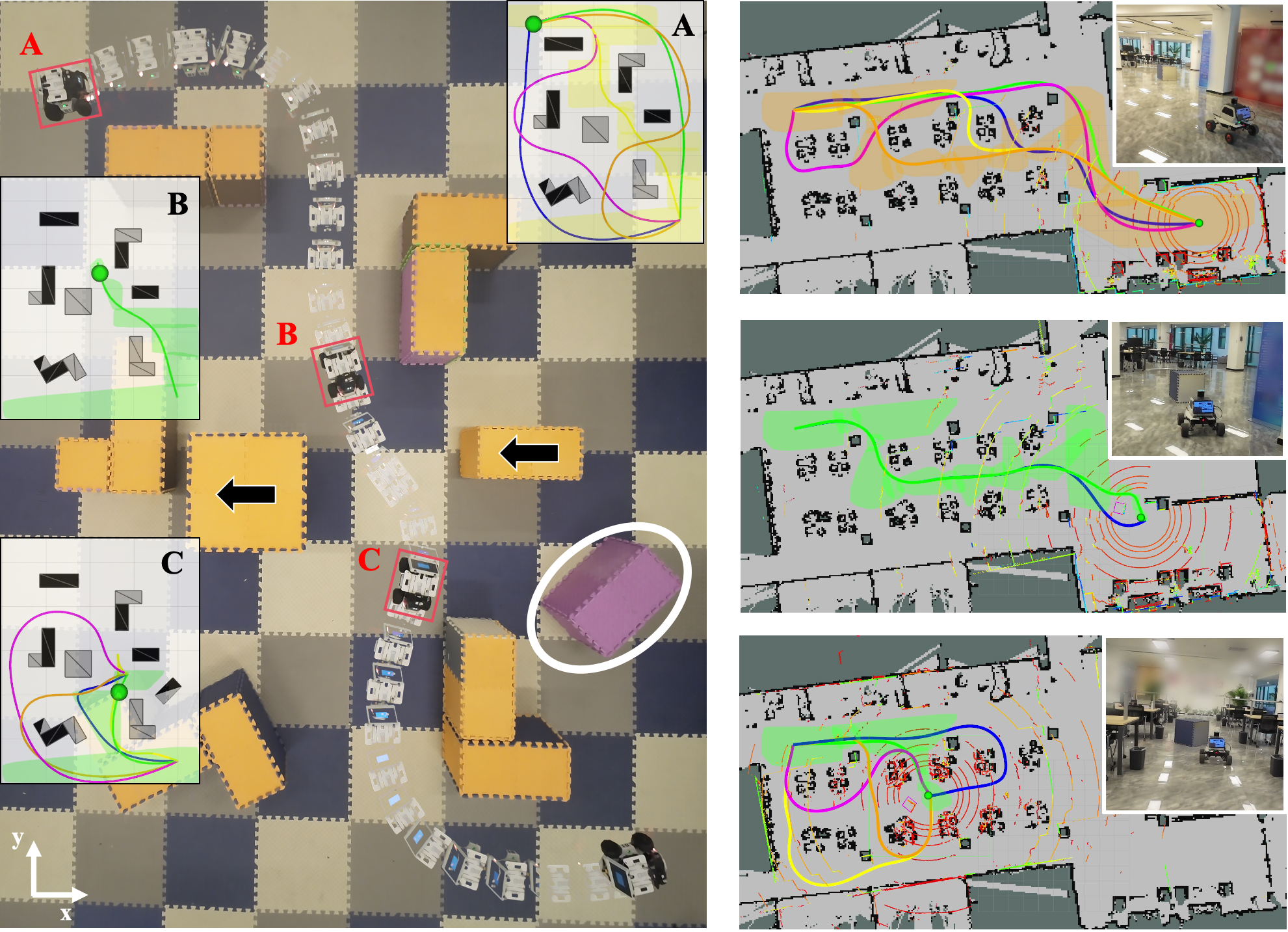}

  \vspace{-0.2cm}

  \caption{
  Snapshots from representative 2-D ground robot experiments.
  \textbf{Left:} Top view of the OptiTrack-based ground robot experiment.
  Black arrows indicate translated obstacles inconsistent with the prior map, while the purple obstacle circled in white was previously unknown.
  \textbf{Right:} Snapshots of the onboard-localization ground robot experiment.
  }

  \vspace{-0.1cm}

  \label{fig:exp_fig_2d}
\end{figure}

\begin{figure}[!t]
  \centering
  \includegraphics[width=0.95\linewidth]{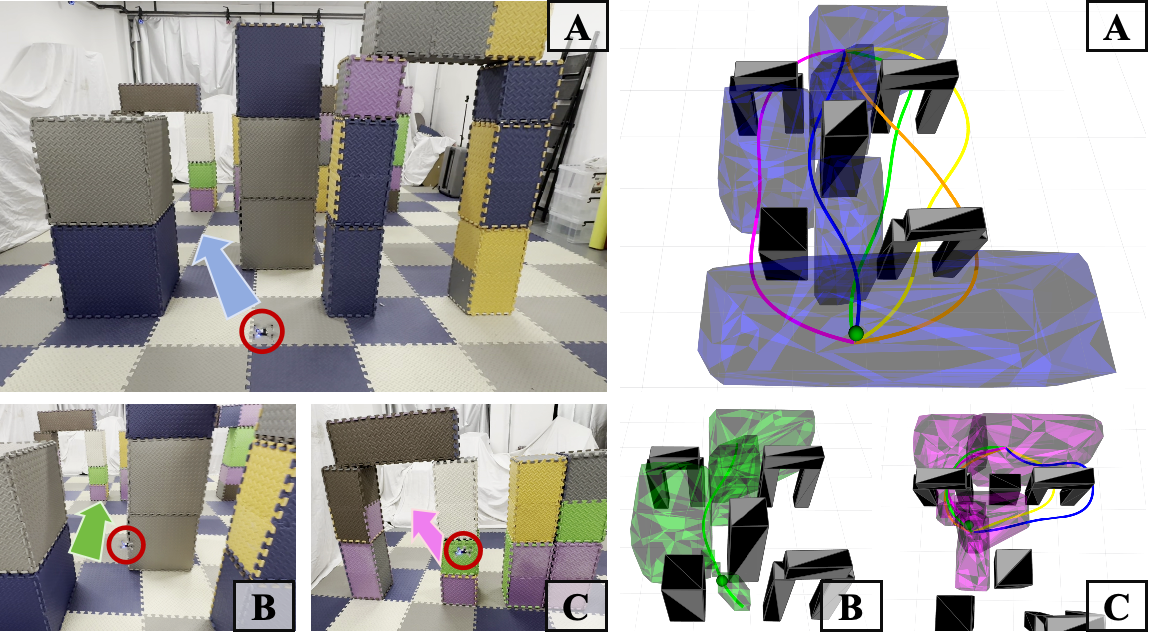}

  \vspace{-0.2cm}

  \caption{
  Snapshots from a representative 3-D quadrotor experiment.
  Panels A--C show initial planning, an online local update, and global replanning, respectively.
  Each stage shows the physical experiment together with the corresponding planning visualization.
  }

  \vspace{-0.2cm}

  \label{fig:exp_fig_3d}
\end{figure}

\subsection{Hardware Experiments}

We validate the complete pipeline in three physical settings: an OptiTrack-based 2-D ground robot, an OptiTrack-based 3-D quadrotor, and an onboard-localization ground robot.
In the first two settings, OptiTrack provides state estimation and planning runs on a laptop, whereas the onboard setting uses AMCL for localization and LiDAR for perception, with the planner running onboard without external motion capture.
The corresponding workspaces measure $4.5\,\mathrm{m}\times5.5\,\mathrm{m}$, $4.5\,\mathrm{m}\times5.5\,\mathrm{m}\times1.5\,\mathrm{m}$, and $33\,\mathrm{m}\times18\,\mathrm{m}$.
Obstacle updates are revealed within the sensing radius, and the top five candidate corridors are enumerated in all experiments.
As summarized in Table~\ref{tab:hardware_summary}, all 15 trials, spanning different start--goal pairs and obstacle-update configurations, reached the goal without collision or timeout under translated and previously unknown obstacles.
In three representative trials shown in Figs.~\ref{fig:exp_fig_2d}--\ref{fig:exp_fig_3d}, offline GCS construction achieves $93.28$--$97.52\%$ coverage in $0.302$--$2.540\,\mathrm{s}$.
Summing the reported F-GCS update, local repair, and global fallback components gives $0.143$--$0.364\,\mathrm{s}$ per update event, with global fallback taking at most $0.179\,\mathrm{s}$ when triggered.

\begin{table}[t]
\centering
\begin{threeparttable}
\caption{Hardware experiment summary.}
\label{tab:hardware_summary}

\begingroup
\scriptsize
\setlength{\tabcolsep}{2.1pt}
\renewcommand{\arraystretch}{1.08}
\begin{tabular}{lcccccccc}
\toprule
\multicolumn{9}{c}{\textbf{Representative timing statistics\textsuperscript{2} from Figs.~\ref{fig:exp_fig_2d}--\ref{fig:exp_fig_3d}}} \\
\midrule
\multirow{2}{*}{\textbf{Setting}\textsuperscript{1}}
& \multicolumn{2}{c}{\textbf{Offline}}
& \multicolumn{3}{c}{\textbf{Update 1}}
& \multicolumn{3}{c}{\textbf{Update 2}} \\
\cmidrule(lr){2-3}\cmidrule(lr){4-6}\cmidrule(lr){7-9}
& \textbf{$T$ (s)} & \textbf{Cov. ($\%$)}
& \textbf{$U_f$ (s)} & \textbf{$R$ (s)} & \textbf{$G$ (s)}
& \textbf{$U_f$ (s)} & \textbf{$R$ (s)} & \textbf{$G$ (s)} \\
\midrule
2-D GR
& 0.302 & 97.52
& 0.124 & 0.019 & --
& 0.127 & 0.008 & 0.059 \\
3-D Quad.
& 2.540 & 96.36
& 0.286 & 0.041 & --
& 0.289 & 0.006 & 0.069 \\
OB-GR
& 1.417 & 93.28
& 0.063 & 0.152 & --
& 0.060 & 0.014 & 0.179 \\
\midrule
\multicolumn{9}{c}{\textbf{Execution statistics: median [min, max] over five trials}} \\
\midrule
\textbf{Setting}\textsuperscript{1}
& \multicolumn{3}{c}{\textbf{Travel time (s)}}
& \multicolumn{3}{c}{\textbf{Length (m)}}
& \multicolumn{2}{c}{\textbf{$J_{\mathrm{ang}}$ ($\mathrm{rad}^2$)}} \\
\midrule
2-D GR
& \multicolumn{3}{c}{$18.33~[14.98,31.15]$}
& \multicolumn{3}{c}{$7.53~[6.95,13.39]$}
& \multicolumn{2}{c}{$1.44~[1.19,2.82]$} \\
3-D Quad.
& \multicolumn{3}{c}{$16.11~[14.26,17.12]$}
& \multicolumn{3}{c}{$6.57~[5.22,6.65]$}
& \multicolumn{2}{c}{$0.18~[0.16,0.27]$} \\
OB-GR
& \multicolumn{3}{c}{$47.80~[24.60,56.80]$}
& \multicolumn{3}{c}{$23.86~[11.89,30.31]$}
& \multicolumn{2}{c}{$0.87~[0.44,3.12]$} \\
\bottomrule
\end{tabular}
\endgroup

\vspace{0.03cm}
\begin{tablenotes}[flushleft]
\footnotesize
\item[1] 2-D GR: the OptiTrack-based ground robot; 3-D Quad.: the OptiTrack-based quadrotor; OB-GR: the onboard-localization ground robot.
\item[2] $T$: offline GCS construction time; Cov.: free-space coverage; $U_f$: F-GCS update time; $R$: local repair time; $G$: global fallback time; ``--'' indicates that global fallback was not triggered.
\end{tablenotes}

\end{threeparttable}
\vspace{-0.35cm}
\end{table}

\section{Conclusion}

This paper presents a reference-path-free, homotopy-aware corridor generation framework using graphs of convex sets.
By representing corridors as convex-set sequences and extending visibility deformation and uniform visibility deformation to them, the method reduces the geometric and topological biases of predefined reference paths while enabling corridor-level similarity reasoning and redundancy reduction.
The adaptive multi-scale graph combines fine-scale geometric fidelity with a compact coarse-scale connectivity abstraction, supporting efficient corridor exploration and localized updates under map uncertainty without full graph reconstruction.
Numerical and hardware experiments validate corridor generation, homotopy-aware exploration, and adaptation to translated and previously unknown obstacles.
Future work will address dynamic environments and multi-agent settings with time-varying and interacting constraints.

\bibliographystyle{IEEEtran}
\bibliography{ref}

\end{document}